\documentclass[letterpaper, 10 pt, conference]{ieeeconf}  

\IEEEoverridecommandlockouts                              

\DeclareMathAlphabet{\mathcal}{OMS}{cmsy}{m}{n}
\DeclareSymbolFont{largesymbols}{OMX}{cmex}{m}{n}
\usepackage{graphicx}
\usepackage{bm}
\usepackage{graphics} 
\usepackage{epsfig} 
\usepackage{times} 
\usepackage{subfigure}
\usepackage{stfloats}
\usepackage{algorithm} 
\usepackage{algorithmic} 
\usepackage{multirow} 
\usepackage{booktabs}
\usepackage{amssymb}  
\usepackage{amsmath,amsfonts}
\newtheorem{theorem}{\bf Theorem}

\usepackage{fancyhdr}
\usepackage{pifont}
\usepackage{microtype}
\usepackage{booktabs}   
\usepackage{multirow}   
\usepackage{array}      
\usepackage{makecell}
\usepackage[hidelinks]{hyperref}

{} 

\author{
Yaohua Zhou$^{1}$, Mingdao Lin$^{2}$, Jiayu Song$^{3}$, Xixin Zhao$^{4}$, Haidong Huang$^{5}$, Jiahu Qin$^{6}$ and Xiaocong Li$^{7,*}$%
\thanks{*Corresponding Author.}
\thanks{$^{1}$Yaohua Zhou is with the College of Information Science and Technology, Eastern Institute of Technology, Ningbo 315200, China, and also with the Department of Automation, University of Science and Technology of China, Hefei 230027, China (e-mail: yhzhou@eitech.edu.cn).}%
\thanks{$^{2}$Mingdao Lin is with the College of Information Science and Technology, Eastern Institute of Technology, Ningbo 315200, China, and also with the Department of Electrical and Electronic Engineering, The Hong Kong Polytechnic University, Hong Kong SAR, China (e-mail: 25086082r@connect.polyu.hk).}%
\thanks{$^{3}$Jiayu Song is with the College of Information Science and Technology, Eastern Institute of Technology, Ningbo 315200, China (e-mail: jysong@eitech.edu.cn).}%
\thanks{$^{4}$Xixin Zhao is with the Department of Interdisciplinary Information Studies, The University of Tokyo, Japan (e-mail: 
zhaoxixin@g.ecc.u-tokyo.ac.jp).}%
\thanks{$^{5}$Haidong Huang is with the University of Nottingham Ningbo China, Ningbo 315100, China (e-mail: ssyhh10@nottingham.edu.cn).}%
\thanks{$^{6}$Jiahu Qin is with the Department of Automation, University of Science and Technology of China, Hefei 230027, China (e-mail: jhqin@ustc.edu.cn).}%
\thanks{$^{7}$Xiaocong Li is with the College of Information Science and Technology and Zhejiang Key Laboratory of Industrial Intelligence and Digital Twin, Eastern Institute of Technology, Ningbo 315200, China (e-mail: xiaocongli@eitech.edu.cn).}%
}

\begin{document}

\title{Joint-Angle Invariance of Jacobian-Based Performance Indices for Serial Robotic Manipulators}

\maketitle

\begin{abstract}
This article investigates the joint-angle invariance (JAI) of Jacobian-based performance indices for serial robotic manipulators. We identify kinematic conditions under which representative performance indices, including manipulability, stiffness, and other Jacobian-derived indices, remain independent of the first and last joint angles. This property allows a systematic reduction in the effective configuration-space dimensionality, thereby simplifying the performance evaluation. The utility of JAI is demonstrated through two application scenarios: workspace performance map construction and task layout optimization. Comparative results show that the proposed JAI-based strategy reduces computational effort and storage requirements without compromising the effectiveness.
\end{abstract}


\section{Introduction}

Serial robotic manipulators have been widely deployed in manufacturing, medical, and aerospace industries due to their large workspaces, structural simplicity, and versatility. The evaluation of the motion/force transmission capabilities relies heavily on Jacobian-based performance indices. The Jacobian captures the instantaneous mapping between joint velocities and Cartesian twists, as well as between joint torques and Cartesian wrenches. On this basis, performance indices are derived to measure various aspects of robot behavior. Notable examples include the manipulability index~\cite{yoshikawa1985manipulability}, which characterizes the motion capability, and the stiffness index~\cite{salisbury1980active}, which evaluates the magnitude of resistance to applied wrenches.

For decades, these performance indices have been applied in robotics. They serve as critical tools in robot design optimization, where geometric parameters are chosen to maximize dexterity or stiffness in targeted regions of the workspace. In trajectory planning, they are used to avoid singularities and maintain desirable kinematic capability. Redundancy resolution strategies frequently incorporate these indices to exploit extra degrees of freedom (DOFs) for enhancing task performance, such as maximizing manipulability. Furthermore, task layout optimization, such as selecting the placement pose of fixtures or workpieces, has been shown to benefit significantly from Jacobian-based performance analysis.

\subsection{Related Work}

The investigation of performance distribution within the robot workspace facilitates the evaluation of its capabilities. Breitkopf~\cite{breitkopf2023modelling} visualized the distribution of manipulability across the entire workspace of a serial robot. Kuang et al.~\cite{kuang2024research} employed a sector space meshing method to construct the distribution of the Jacobian condition number, considering the  workspace boundary. Such analyses provide valuable guidance for the design and optimization of robots. For instance, Ginnante et al.~\cite{ginnante2023task} utilized a task priority control algorithm to optimize task-oriented robot design, thereby enhancing the kinetostatic performance in desired regions of the workspace. In contrast, Hwang et al.~\cite{hwang2017design} focused on a comprehensive dimensional synthesis based on global performance indices.

Designing trajectory optimization schemes that incorporate Jacobian-based performance indices can effectively enhance the execution capability of robotic manipulators. For redundant robots, manipulability optimization has been widely recognized as a fundamental approach to improving motion capability while avoiding singular configurations. Tan et al.~\cite{tan2025manipulability} proposed a manipulability optimization method based on singular value analysis, which avoids the instability associated with pseudo-inverse computations. Yang et al.~\cite{yang2023kinematic} extended the optimization beyond the velocity level and developed a manipulability maximization scheme at the acceleration level. To improve computational efficiency in redundancy resolution, Shen et al.~\cite{shen2023concentric} introduced a derivative propagation approach. Further advancements include the optimization of velocity/force manipulability in bilateral teleoperation of coordinated remote redundant manipulators~\cite{yang2024passivity}. More recently, cross-domain manipulability transfer has gained attention. For instance, Gong et al.~\cite{gong2025manipulability} incorporated Riemannian manifold theory to achieve cross-domain mapping of human-robot manipulability ellipsoids via parallel transport. In manufacturing processes such as grinding, polishing, and milling, various stiffness-based trajectory optimization methods have been explored, with the aim of mitigating vibrations during execution~\cite{xiong2019stiffness,zhou2022sampling,zhang2024novel}.

In the field of robotic manipulation and manufacturing, task layout optimization generally refers to the systematic adjustment of the workpiece pose, so as to enhance overall operational performance. Vahrenkamp et al.~\cite{vahrenkamp2012manipulability} proposed an extended manipulability measure and applied it to online grasp selection for humanoid robots. For tasks involving the manipulation of large-scale objects, Franceschi et al.~\cite{franceschi2022optimal} introduced a hierarchical optimization framework ranging from global to local levels to optimize work cell layout. In specific process applications, Jiao et al.~\cite{jiao2018processing} focused on robotic drilling systems, developing a redundancy optimization model that considers directional stiffness performance. In machining optimization incorporating stiffness thresholds, Liao et al.~\cite{liao2021optimization} concentrated on robotic milling, optimizing both the robot redundancy and workpiece setup. Regarding autonomous exploration and planning, Liu et al.~\cite{liu2024manipulability} established an efficient exploration information system capable of continuously identifying high-manipulability configurations, thereby enabling adaptive layout decisions in unknown environments. Moreover, the adjustment of the robot base constitutes a critical aspect of task layout optimization, with typical applications including manipulator repositioning via mobile platforms for the machining of large components~\cite{wang2024robot}.

\subsection{Overview}

Jacobian-based performance indices are widely used in robotics, while their dependence on the full joint configuration presents two challenges. First, in high-DOF systems, these indices tend to vary considerably across the joint space, resulting in high computational and visualization costs. Second, the absence of identifiable invariance properties hinders theoretical analysis and limits their applicability in efficient optimization. To address these issues, it is essential to identify structural properties that simplify analysis without loss of generality. 

This study introduces joint-angle invariance (JAI)--a property exhibited by specific Jacobian-based performance indices for a class of serial robots, whereby the indices remain independent of the first and last joint angles. This invariance substantially reduces the computational burden in both performance analysis and optimization. The utility of JAI is demonstrated in two application scenarios: performance map construction and task layout optimization, and we achieve notable gains in computational efficiency in both cases.

\section{Jacobian-Based Performance Indices for Serial Robotic Manipulators}

This section presents a summary of Jacobian-based performance indices for serial robotic manipulators, focusing on kinematic and stiffness performance.

In robotics, the Jacobian describes the differential mapping between joint space and Cartesian space. Let $m$ denote the dimension of the Cartesian space and $n$ denote the joint DOF, then the above mapping can be expressed as 
\begin{equation}
\bm V=\bm J({\bm q}) \dot{\bm q},
\label{V=Jq}
\end{equation}
where $\bm V = [\bm \omega^{\rm T},\bm v^{\rm T}]^{\rm T} \in \mathbb{R}^{m \times 1}$ is the Cartesian-space velocity, and $\bm J(\bm q) \in \mathbb{R}^{m \times n}$, which is determined by the joint configuration $\bm q\in \mathbb{R}^{n \times 1}$, denotes the Jacobian. To facilitate the description of Cartesian performance, the representation based on the body velocity and body Jacobian is adopted in this study. Based on the force mapping and deformation relationship, the Cartesian stiffness matrix can be formulated as 
\begin{equation}
\bm K=\left(\bm {J}\bm K_q^{-1} \bm J ^{\rm T}\right)^{-1}
\end{equation}
where $\bm K_q$ is the joint stiffness matrix.

The kinematic and stiffness performance of a robot are commonly characterized by velocity transmission and deformation resistance, which are analyzed with respect to $\bm J$ and $\bm K$, respectively. Table~\ref{Jacobian-Based Performance Indices for Serial Manipulators} summarizes commonly used Jacobian-based performance indices, encompassing two categories: kinematic and stiffness performance. These indices are further classified into directional (line-based), planar (areal-based), and volumetric measures, together with condition number-based isotropy indices (2-norm and Frobenius-norm). All indices are derived from the robot's Cartesian manipulability or stiffness ellipsoid and possess corresponding geometric representations.

\begin{table*}[htbp]
        \centering
        \renewcommand{\arraystretch}{2}
        \caption{Jacobian-Based Performance Indices for Serial Manipulators.}
        \label{Jacobian-Based Performance Indices for Serial Manipulators}
        \begin{tabular}{m{2cm}<{\centering}m{4cm}<{\centering}m{5.5cm}<{\centering}m{4.5cm}<{\centering}}
        \toprule
        Category & Performance index & Mathematical Form & Description \\
        \midrule
        \multirow{6}{*}{\makecell[c]{Kinematic \\ performance}} & Directional manipulability index & $M_D=\dfrac{1}{\sqrt{\bm{\xi}^{\rm T}(\bm{JJ}^{\rm T})^{-1}\bm{\xi}}}$ & \makecell[c]{Manipulability along \\ a unit velocity direction $\bm \xi$}  \\
        & Planar manipulability index & $M_P=\dfrac{1}{\sqrt{\det(\bm{P}^{\rm T}(\bm{JJ}^{\rm T})^{-1}\bm{P})}}$ & \makecell[c]{Manipulability within a plane \\ defined by $\bm P \in \mathbb{R}^{m \times 2}$} \\
        & Volumetric manipulability index & $M_V=\dfrac{1}{\sqrt{\det(\bm{Q}^{\rm T}(\bm{JJ}^{\rm T})^{-1}\bm{Q})}}$ & \makecell[c]{Manipulability over a subspace \\ defined by $\bm Q \in \mathbb{R}^{m \times l}(3 \leqslant l \leqslant m)$} \\
        & \makecell[c]{Jacobian condition number index \\ (2-norm)} & $M_{C,2}=\dfrac{\sigma_{\min}(\bm J)}{\sigma_{\max}(\bm J)}$ & \makecell[c]{Kinematic isotropy: ratio of \\ extreme singular values} \\
        & \makecell[c]{Jacobian condition number index \\  (Frobenius-norm)} & $M_{C,F}=\dfrac{m}{\sqrt{{\rm trace} \left(\bm{JJ}^{\rm {T}}\right){\rm trace} \left((\bm{JJ}^{\rm {T}})^{-1}\right)}}$ & \makecell[c]{Normalized isotropy considering \\  all singular values} \\
        \midrule
        \multirow{6}{*}{\makecell[c]{Stiffness \\ performance}} & Directional stiffness index & $K_D=\dfrac{1}{\sqrt{\bm{\zeta}^{\rm T}\bm{K}^{-2}\bm{\zeta}}}$ & \makecell[c]{Stiffness along \\ a unit force direction $\bm \zeta$} \\
        & Planar stiffness index & $K_P=\dfrac{1}{\sqrt{\det(\bm{P}^{\rm T}\bm{K}^{-2}\bm{P})}}$ & \makecell[c]{Stiffness within a plane \\ defined by $\bm P \in \mathbb{R}^{m \times 2}$} \\
        & Volumetric stiffness index & $K_V=\dfrac{1}{\sqrt{\det(\bm{Q}^{\rm T}\bm{K}^{-2}\bm{Q})}}$ & \makecell[c]{Stiffness over a subspace \\ defined by $\bm Q\in \mathbb{R}^{m \times l}(3 \leqslant l \leqslant m)$} \\
        & \makecell[c]{Stiffness condition number index \\ (2-norm)} & $K_{C,2}=\dfrac{\lambda_{\min}(\bm K)}{\lambda_{\max}(\bm K)}$ & Measure of stiffness isotropy \\
        & \makecell[c]{Stiffness condition number index \\ (Frobenius-norm)} & $K_{C,F}=\dfrac{m}{\sqrt{{\rm trace} \left(\bm{K}^2\right){\rm trace} \left(\bm{K}^{-2}\right)}}$ & Normalized isotropy of stiffness \\
        \bottomrule
        \end{tabular}
\end{table*}

For a general serial robot, the minimum-norm inverse solution $\dot{\bm q}=\bm J^{\rm T} (\bm {JJ}^{\rm T})^{-1}\bm V$ is derived from~\eqref{V=Jq}. By defining a joint-space unit sphere $\dot{\bm q}^{\rm T}\dot{\bm q}=1$, we further derive the analytical equation of the manipulability ellipsoid:
\begin{equation}
\bm V^{\rm T}\left(\bm {JJ}^{\rm T}\right)^{-1}\bm V=1.
\label{manipulability ellipsoid}
\end{equation}
When a wrench $\bm F$ is applied to the robot's end-effector, its deformation is considered as $\bm D=\bm K^{-1}\bm F$. Correspondingly, by defining a unit sphere $\bm D^{\rm T}\bm D=1$, an analytical equation for the stiffness ellipsoid is obtained:
\begin{equation}
\bm F ^{\rm T}\bm K^{-2}\bm F=1.
\label{stiffness ellipsoid}
\end{equation}
Both manipulability ellipsoid and stiffness ellipsoid are $m$-dimensional, and the lengths of their semi-axes along each orthogonal direction correspond to the singular values of $\bm J$ and eigenvalues of $\bm K$, respectively. When $m=3$, the ellipsoid can be directly visualized, as shown in Fig.~\ref{Cartesian manipulability/stiffness ellipsoid}. Its center is located at the robot's end-effector, representing the manipulability or stiffness performance. As shown in Fig.~\ref{Cartesian manipulability/stiffness ellipsoid} and Table~\ref{Jacobian-Based Performance Indices for Serial Manipulators}:

1) $M_D$ and $K_D$: directional (linear) performance indices. Let $\bm V=M_D\bm \xi$, where $M_D=||\bm V||_2$ and $\bm \xi$ is a unit velocity direction. Substituting it into \eqref{manipulability ellipsoid}, we obtain $M_D^2 \bm \xi^{\rm T} \left(\bm {JJ}^{\rm T}\right)^{-1}\bm \xi=1$, thus deriving the analytical expression for $M_D$ (Table~\ref{Jacobian-Based Performance Indices for Serial Manipulators}). $M_D$ characterizes the velocity transmission capability along $\bm \xi$, under the condition of unit joint velocity ($\dot{\bm q}^{\rm T}\dot{\bm q}=1$). Similarly, let $\bm F=K_D\bm \zeta$, then $K_D^2\bm \zeta ^{\rm T}\bm K^{-2}\bm \zeta=1$, thus obtaining $K_D$. Stiffness is higher along $\bm \zeta$ if a larger wrench is required to achieve a unit deformation ($\bm D^{\rm T}\bm D=1$).

2) $M_P$, $M_V$, $K_P$ and $K_V$: planar and volumetric performance indices. Consider an $l$-dimensional subspace defined by $(m-l)$ linear constraints:
\begin{equation}
\bm C \bm V=\bm 0, \, \bm C \bm F=\bm 0
\label{homogeneous system}
\end{equation}
where $\bm C \in \mathbb{R}^{(m-l) \times m}$ is the constraint matrix, whose row vectors specify $m-l$ orthogonal hyperplane normals, confining $\bm V$ (or $\bm F$) to the null space of $\bm C$. The subspace is denoted by $\bm P$ when $l=2$, and by $\bm Q$ when $3 \leqslant l \leqslant m$. Both $\bm P$ and $\bm Q$ are selected as matrices with orthonormal columns. In practice, performance indices are typically evaluated with respect to the origin of the tool frame; hence, the hyperplanes are assumed to pass through the origin, i.e., \eqref{homogeneous system} is considered as a homogeneous system. Taking $M_P$ as an example, the subspace velocity $\bm V_{sub}$ satisfies $\bm V=\bm P \bm V_{sub}$. Substituting this into \eqref{manipulability ellipsoid}, we obtain $\bm V_{sub}^{\rm T} \bm P^{\rm T} \left(\bm {JJ}^{\rm T}\right)^{-1}\bm P \bm V_{sub}=1$. The area of this ellipse can be characterized by $1/\sqrt{\det(\bm{P}^{\rm T}(\bm{JJ}^{\rm T})^{-1}\bm{P})}$, describing the planar manipulability $M_P$. Similarly, $M_V$, $K_P$, and $K_V$ can be derived. In practical applications, $K_P$ is often taken as the square root, while $K_V$ is taken as the root of order ${\rm rank}(\bm Q)$, to achieve normalization and improve numerical stability. Two typical examples are provided below to illustrate the planar and volumetric performance indices.

(i) When $m=6$ and the hyperplane constraints are $\omega_x=\omega_y=\omega_z=v_z=0$ (or $\tau_x=\tau_y=\tau_z=f_z=0$), then
\begin{equation}
\bm C = \left[ {\begin{array}{*{20}{c}}
  1&0&0&0&0&0 \\ 
  0&1&0&0&0&0 \\ 
  0&0&1&0&0&0 \\ 
  0&0&0&0&0&1 
\end{array}} \right], \bm P = \left[ {\begin{array}{*{20}{c}}
  0&0 \\ 
  0&0 \\ 
  0&0 \\ 
  1&0 \\ 
  0&1 \\ 
  0&0 
\end{array}} \right].
\end{equation}
In this case, $M_P$ represents the manipulability within the $v_x$-$v_y$ plane, while $K_P$ comprehensively considers the linear stiffness in both $x$ and $y$ directions.

(ii) When $m=6$ and there are no constraints, then $\bm C = {\emptyset _{0 \times 6}}$ and $\bm Q = { \rm{\bm I}_{6 \times 6}}$. In this case, $M_V$ and $K_V$ represent the overall manipulability and stiffness performance of the robot, i.e., $M_V=\sqrt{\det(\bm {JJ}^{\rm T})}$ and $K_V=\det(\bm K)$. 

\begin{figure}[hptb]
        \centering
        \includegraphics[width=0.3\textwidth]{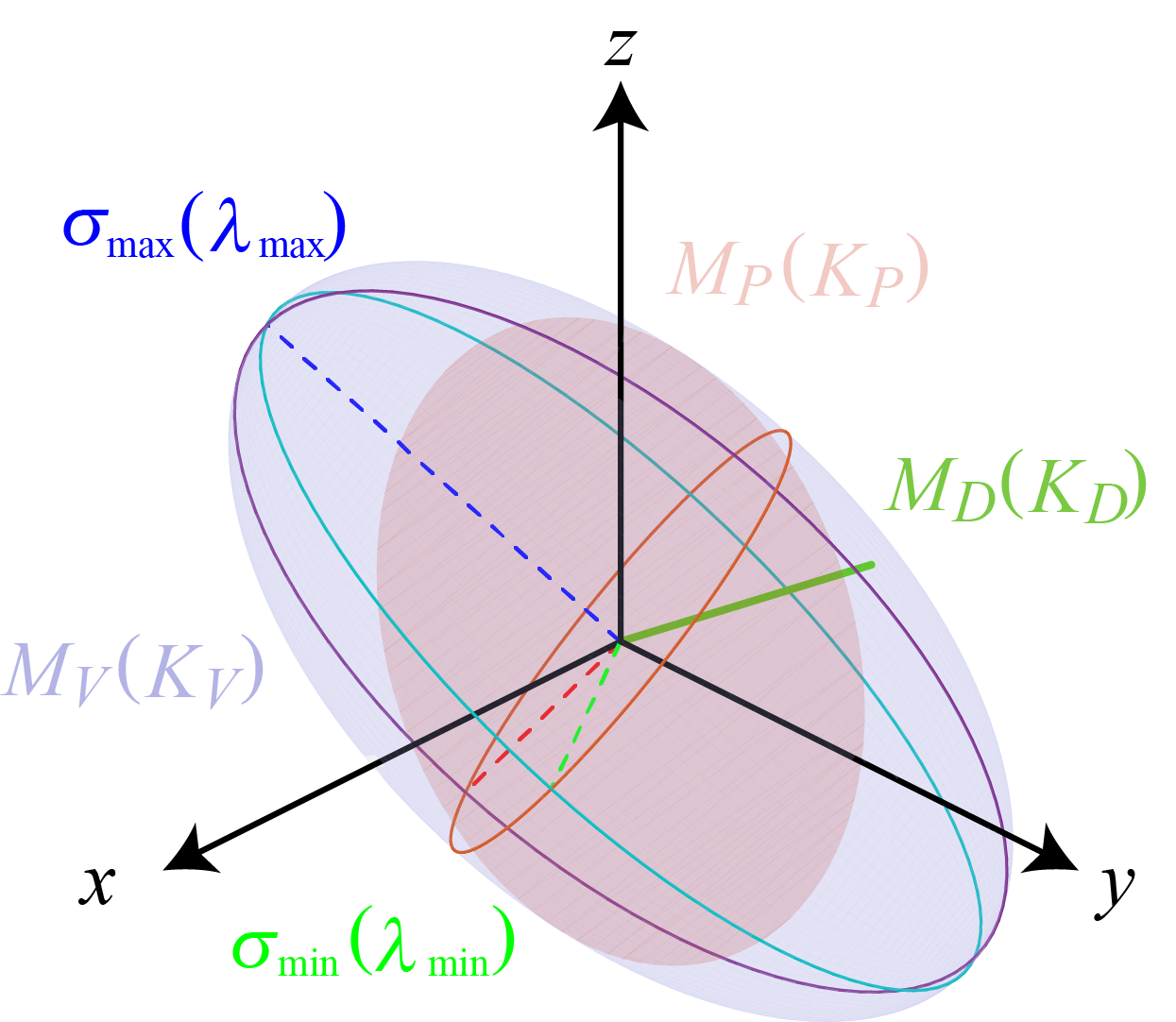}
        \caption{Cartesian manipulability/stiffness ellipsoid of the robot when $m=3$. Those Jacobian-based performance indices presented in Table~\ref{Jacobian-Based Performance Indices for Serial Manipulators} possess distinct geometric interpretations within this ellipsoid.}
        \label{Cartesian manipulability/stiffness ellipsoid}
\end{figure}

3) $M_{C,2}$, $M_{C,F}$, $K_{C,2}$ and $K_{C,F}$: condition number-based performance indices. These four indices are defined as reciprocals of the corresponding condition numbers, representing the dexterity and transmission accuracy of the robot. The 2-norm condition number is only sensitive to the extreme singular values (eigenvalues), while the Frobenius-norm condition number depends on all singular values (eigenvalues).

\section{JAI of Jacobian-based Performance Indices}

\subsection{Applicable Serial Robotic Manipulators}

The JAI discussed in this paper is primarily formulated for the serial robot configuration illustrated in Fig.~\ref{robot}. Many industrial manipulators (e.g., UR, KUKA, ABB) satisfy this structure or can be reparameterized accordingly. The base plane refers to the plane where the robot is mounted, while the tool plane denotes the plane perpendicular to the last link. The robot is required to satisfy the following mechanism characteristics: the first and last joints are revolute joints whose axes are perpendicular to the base plane and the tool plane, respectively. There are no specific requirements for intermediate joints. Frames are typically established by convention (not strictly mandatory) such that the $z$-axes of the base frame $\{B\}$ and the tool frame $\{T\}$ are aligned with the axes of the first and last joints.
\begin{figure}[hptb]
        \centering
        \includegraphics[width=0.35\textwidth]{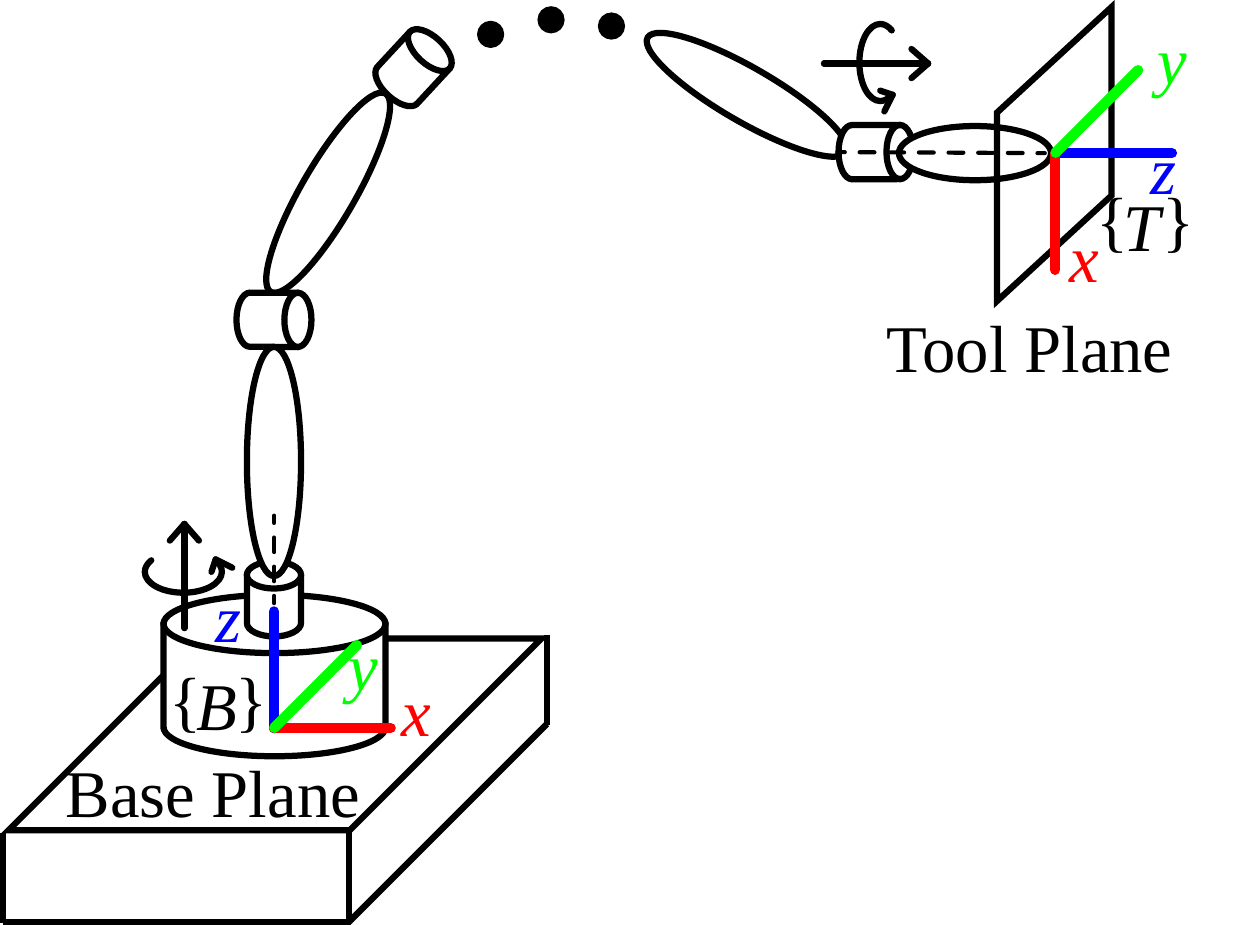}
        \caption{Applicable serial manipulator configuration for JAI of Jacobian-based performance indices.}
        \label{robot}
\end{figure}

In this paper, the considered Jacobian-based performance indices are defined under the assumption that the evaluated task Jacobian has full row rank, which implies $m \leq n$. This setting includes both nonredundant and redundant cases. For lower-dimensional tasks, the performance indices are evaluated only within the selected attainable task subspace.

\subsection{Properties and Proof of JAI}

JAI primarily focuses on the first and the last (or $n$-th) joints. An essential objective is to prove that the performance indices listed in Table~\ref{Jacobian-Based Performance Indices for Serial Manipulators} are independent of these two joint angles under specified conditions. A key advantage of JAI is that it enables both angles to be set to zero, thereby increasing matrix sparsity and improving computational efficiency.

Rotate the first and last joints by arbitrary angles, i.e., $\bm q_{\Delta}=\bm q + [q_{1\Delta},0,...,0,q_{n\Delta}]^{\rm T}$. Thus the pose transformation is:
\begin{equation}
\begin{gathered}
  {\bm R}_{\Delta}=\bm R_z(q_{1\Delta})\bm R\bm R_z(q_{n\Delta})  \hfill \\
  \bm p_{\Delta}=\bm R_z(q_{1\Delta})\bm p \hfill \\ 
\end{gathered} 
\end{equation}
$\bm R$ and $\bm p$ represent the robot's orientation and position (subscript ``$\Delta$'' denotes transformed values). $\bm R_z( \bullet )$ denotes the rotation matrix about the $z$-axis. 

The following derivation is carried out in the 6-dimensional Cartesian space ($m=6$). Lower-dimensional performance evaluation is defined on an $l$-dimensional task subspace. Therefore, the applicability of the subsequent JAI analysis depends on whether the selected task subspace is invariant under the transformation induced by the rotations of the first and last joints.

The body velocity undergoes the following transformation:
\begin{equation}
\begin{gathered}
\begin{aligned}
\bm \omega_{\Delta}
&=\left(\bm R_{\Delta}^{\rm T}\bm{\dot R}_{\Delta}\right)^{ \vee }=\left(\bm R_z(q_{n\Delta})^{\rm T}\bm R^{\rm T}\bm{\dot R}\bm R_z(q_{n\Delta})\right)^{\vee}\\
&=\left(\bm R_z(q_{n\Delta})^{\rm T}\bm \omega\bm R_z(q_{n\Delta})\right)^{\vee}=\bm R_z(q_{n\Delta})^{\rm T}\bm \omega
\end{aligned}   \hfill \\
\bm v_{\Delta}=\bm R_{\Delta}^{\rm T} \dot{\bm p}_{\Delta}=\bm R_z(q_{n\Delta})^{\rm T}\bm R^{\rm T}\dot{\bm p}=\bm R_z(q_{n\Delta})^{\rm T}\bm v \hfill \\ 
\end{gathered} 
\end{equation}
where $(\bullet)^{\vee}$ denotes the isomorphic transformation from a skew-symmetric matrix to a vector. Accordingly, the transformed body Jacobian is denoted as
\begin{equation}
\begin{aligned}
&\bm J_{\Delta}
=\bm A \bm J= \left[ {\begin{array}{*{20}{c}}
\bm R_z(q_{n\Delta})^{\rm T}&{\bm 0} \\ 
{\bm 0}&\bm R_z(q_{n\Delta})^{\rm T}
\end{array}} \right] \bm J \\
\end{aligned}
\label{A}
\end{equation}
where $\bm A$ denotes the adjoint operator for a pure $z$-axis rotation. We aim to discuss the transformation properties of the quadratic form matrices associated with both the manipulability and stiffness ellipsoids:
\begin{equation}
\begin{gathered}
\left( \bm J_{\Delta}\bm J_{\Delta}^{\rm T}\right)^{-1}=\bm A \left( \bm J\bm J^{\rm T}\right)^{-1}\bm A^{\rm T}  \\
\bm K_{\Delta}^{-2}
=\left(\bm A \bm J \bm K_q^{-1}\bm J^{\rm T}\bm A^{\rm T}\right)\left(\bm A \bm J \bm K_q^{-1}\bm J^{\rm T}\bm A^{\rm T}\right) =\bm A \bm K^{-2}\bm A^{\rm T}  \\ 
\end{gathered} 
\end{equation}
Clearly, the same transformation applies to both quadratic form matrices. By denoting both $(\bm{JJ})^{-1}$ and $\bm K^{-2}$ uniformly as $\bm B$ (general symmetric positive definite matrix), we now discuss the JAI for all performance indices listed in Table~\ref{Jacobian-Based Performance Indices for Serial Manipulators}.

1) $M_D,M_P,M_V,K_D,K_P$ and $K_V$

For the sake of notational uniformity, we denote $\bm \xi$, $\bm \zeta$, $\bm P$ and $\bm Q$ by $\bm U \in \mathbb{R}^{6 \times l}(1 \leqslant l \leqslant 6)$, whose column vectors are orthonormal and $\bm U^{\rm T}\bm U= \mathbf{ I}_{l \times l}$. Besides, these six performance indices share the same mathematical form: $1/\sqrt{\det(\bm{U}^{\rm T}\bm B\bm{U})}$. Accordingly, the condition of JAI for these performance indices can be formulated as follows:
\begin{equation}
\det\left(\bm{U}^{\rm T}\bm B\bm{U}\right)=\det\left(\bm{U}^{\rm T}\bm A\bm B \bm A^{\rm T}\bm{U}\right)
\label{invariance}
\end{equation}
We aim to identify some suitable $\bm U$ satisfying \eqref{invariance} to clarify the applicable conditions. The following theorem is given.
\begin{theorem}
Let $\bm A \in \mathbb R^{m \times m}$ be an orthogonal matrix and $\bm U\in \mathbb R^{m \times l} (m \geqslant l)$ be a matrix with orthonormal columns. For all symmetric positive definite matrices $\bm B \in \mathbb R^{m \times m}$, the necessary and sufficient condition for \eqref{invariance} to hold is that the column spaces of $\bm U$ and $\bm A^{\rm T}\bm U$ are identical: $ {\rm Col}\left(\bm U\right)= {\rm Col}\left(\bm A^{\rm T}\bm U\right)$.
\end{theorem}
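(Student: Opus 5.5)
Write $\bm W=\bm A^{\rm T}\bm U$. Since $\bm A$ is orthogonal, $\bm W$ also has orthonormal columns, and \eqref{invariance} reads $\det(\bm U^{\rm T}\bm B\bm U)=\det(\bm W^{\rm T}\bm B\bm W)$ for every symmetric positive definite $\bm B$. The plan is to prove sufficiency by a change of basis inside the common column space. Necessity will follow from a well-chosen family of test matrices $\bm B$.

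\textbf{Sufficiency.} Assume ${\rm Col}(\bm U)={\rm Col}(\bm W)$. Both matrices then have orthonormal columns spanning the same $l$-dimensional subspace, so $\bm W=\bm U\bm G$ with $\bm G=\bm U^{\rm T}\bm W\in\mathbb R^{l\times l}$. The matrix $\bm G$ is orthogonal, because $\bm G^{\rm T}\bm G=\bm W^{\rm T}\bm U\bm U^{\rm T}\bm W=\bm W^{\rm T}\bm W=\mathbf I$, using that $\bm U\bm U^{\rm T}$ is the orthogonal projector onto ${\rm Col}(\bm U)\ni$ columns of $\bm W$. Therefore
\[
\det(\bm W^{\rm T}\bm B\bm W)=\det(\bm G)^2\det(\bm U^{\rm T}\bm B\bm U)=\det(\bm U^{\rm T}\bm B\bm U),
\]
and \eqref{invariance} holds for every $\bm B$.

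\textbf{Necessity.} Suppose the determinants agree for all symmetric positive definite $\bm B$. Take $\bm B_t=\mathbf I+t\,\bm U\bm U^{\rm T}$ with $t>0$, which is symmetric positive definite.
\begin{itemize}
\item On the $\bm U$ side, $\bm U^{\rm T}\bm B_t\bm U=(1+t)\mathbf I_l$, so the determinant is $(1+t)^l$.
\item On the $\bm W$ side, $\bm W^{\rm T}\bm B_t\bm W=\mathbf I_l+t\,\bm M^{\rm T}\bm M$ with $\bm M=\bm U^{\rm T}\bm W$. Let $s_1,\dots,s_l$ be the singular values of $\bm M$, i.e.\ the cosines of the principal angles between the two subspaces. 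Since $\|\bm M\|_2\le\|\bm U\|_2\|\bm W\|_2=1$, each $s_i\in[0,1]$.
\end{itemize}
The determinant on the $\bm W$ side is $\prod_i(1+t s_i^2)$. Equality with $(1+t)^l$ for some $t>0$ forces every $s_i=1$, since each factor satisfies $1+ts_i^2\le 1+t$, with equality iff $s_i=1$. Then $\bm M^{\rm T}\bm M=\mathbf I$, so $\|\bm U\bm U^{\rm T}\bm W\bm x\|=\|\bm W\bm x\|$ for every $\bm x$. Hence each column of $\bm W$ equals its own projection onto ${\rm Col}(\bm U)$ and lies in it. Both spaces have dimension $l$, so ${\rm Col}(\bm W)={\rm Col}(\bm U)$.

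\textbf{Main obstacle.} The only delicate point is in the necessity direction: choosing a test matrix $\bm B$ that detects any deviation between the subspaces, and justifying $s_i\le 1$. The bound $s_i\le 1$ comes from $\|\bm M\|_2\le 1$ as noted. If one prefers to avoid singular values, an alternative is to differentiate $\det(\bm W^{\rm T}(\mathbf I+t\bm U\bm U^{\rm T})\bm W)$ at $t=0$. This gives the condition ${\rm trace}(\bm M^{\rm T}\bm M)=\|\bm U^{\rm T}\bm W\|_F^2=l$. Combined with $\|\bm U\bm U^{\rm T}\bm w_j\|\le 1$ for each unit column $\bm w_j$ of $\bm W$, this again gives $\bm w_j\in{\rm Col}(\bm U)$.
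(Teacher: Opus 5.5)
Your proof is correct. The sufficiency half matches the paper's: write $\bm A^{\rm T}\bm U=\bm U\bm C$ with $\bm C$ orthogonal and factor out $\det(\bm C)^2=1$. The necessity half takes a different route.

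The paper argues by contradiction using the rank-one family $\bm B=\mathbf I+\alpha\bm b\bm b^{\rm T}$ with $\bm b$ arbitrary. The matrix determinant lemma turns the two sides into $1+\alpha\bm b^{\rm T}\bm U\bm U^{\rm T}\bm b$ and $1+\alpha\bm b^{\rm T}\bm W\bm W^{\rm T}\bm b$. Equality for every $\bm b$ then forces the orthogonal projectors $\bm U\bm U^{\rm T}$ and $\bm W\bm W^{\rm T}$ to coincide, and equal projectors have equal ranges.

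You instead fix one test matrix adapted to $\bm U$, namely $\mathbf I+t\bm U\bm U^{\rm T}$. You then use the extremal inequality $\prod_i(1+ts_i^2)\le(1+t)^l$, with equality only if every principal-angle cosine $s_i$ equals $1$.

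The paper's argument needs no inequality or singular-value bound. It uses only that a symmetric matrix is determined by its quadratic form and a projector by its range, and it displays the invariance condition directly as equality of projectors. However, it draws on a whole family of test matrices $\bm B$.

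Your argument is direct rather than by contradiction, and it yields a slightly stronger statement. Invariance for the single matrix $\mathbf I+t\bm U\bm U^{\rm T}$, for any one $t>0$, already forces ${\rm Col}(\bm U)={\rm Col}(\bm A^{\rm T}\bm U)$. The ratio $\prod_i(1+ts_i^2)/(1+t)^l$ also measures how far invariance fails, in terms of the principal angles between the two subspaces.
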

\begin{proof}
(i) Sufficiency: Given that $ {\rm Col}\left(\bm U\right)= {\rm Col}\left(\bm A^{\rm T}\bm U\right)$, every column of $\bm A^{\rm T}\bm U$ lies in the column space of $\bm U$, which implies that there exists a matrix $\bm C \in \mathbb{R}^{l \times l}$ such that $\bm A^{\rm T}\bm U = \bm U\bm C$. Since $(\bm A^{\rm T}\bm U)^{\rm T}\bm A^{\rm T}\bm U=\bm U^{\rm T}\bm A\bm A^{\rm T}\bm U=\mathbf{ I}_{l \times l}$, we have $(\bm U\bm C)^{\rm T}\bm U\bm C=\bm C^{\rm T}\bm C=\mathbf{ I}_{l \times l}$. Thus, $\bm C $ is an orthogonal matrix satisfying $\left(\det\left(\bm C\right)\right)^2=1$. These imply that 
$\det\left(\bm{U}^{\rm T}\bm A\bm B \bm A^{\rm T}\bm{U}\right)
=\det\left((\bm U\bm C)^{\rm T}\bm B (\bm U\bm C)\right) 
=\left(\det\left(\bm C\right)\right)^2\det\left(\bm{U}^{\rm T}\bm B\bm{U}\right)=\det\left(\bm{U}^{\rm T}\bm B\bm{U}\right)$, which completes the proof of sufficiency.

(ii) Necessity: We prove necessity by contradiction. Assume that $ {\rm Col}\left(\bm U\right) \ne  {\rm Col}\left(\bm A^{\rm T}\bm U\right)$. Consider a symmetric positive definite matrix $\bm B$ of the form: $\bm B=\mathbf{I}_{m\times m}+\alpha \bm b \bm b^{\rm T}$, where $\alpha>0$ and $\bm b \in \mathbb{R}^{m\times 1}$ is an arbitrary vector. Based on the matrix determinant lemma~\cite{ding2007eigenvalues}, we can obtain
\begin{equation}
\begin{gathered}
\begin{aligned}
\det\left(\bm{U}^{\rm T}\bm B\bm{U}\right)
&=\det\left(\mathbf{I}_{l\times l}+\alpha\bm U^{\rm T}\bm b(\bm U^{\rm T}\bm b)^{\rm T}\right) \\
&=1+\alpha \bm b ^{\rm T} \bm U \bm U^{\rm T}\bm b
\end{aligned}
\hfill \\
\begin{aligned}
\det\left(\bm{U}^{\rm T}\bm A\bm B \bm A^{\rm T}\bm{U}\right)
&=\det\left(\mathbf{I}_{l\times l}+\alpha\bm U^{\rm T} \bm A \bm b(\bm U^{\rm T} \bm A \bm b)^{\rm T}\right) \\
&=1+\alpha \bm b^{\rm T} (\bm A^{\rm T}\bm U)(\bm A^{\rm T}\bm U)^{\rm T}\bm b
\end{aligned}
\hfill \\
\end{gathered} 
\end{equation}
The two expressions above are equal for arbitrary $\bm b$ if and only if the quadratic form matrices coincide, i.e., $\bm U \bm U^{\rm T}=(\bm A^{\rm T}\bm U)(\bm A^{\rm T}\bm U)^{\rm T}$, which correspond exactly to the orthogonal projection matrices onto $\bm U$ and $\bm A^{\rm T} \bm U$. Specifically, $\bm U(\bm U^{\rm T}\bm U)^{-1}\bm U^{\rm T}=\bm U \bm U^{\rm T}$ and $\bm A^{\rm T} \bm U((\bm A^{\rm T} \bm U)^{\rm T}\bm A^{\rm T} \bm U)^{-1}(\bm A^{\rm T} \bm U)^{\rm T}=(\bm A^{\rm T}\bm U)(\bm A^{\rm T}\bm U)^{\rm T}$ hold. The equality of the orthogonal projection matrices implies that the column spaces of the two matrices are identical, i.e., ${\rm Col}(\bm U)= {\rm Col}(\bm A^{\rm T}\bm U)$, contradicting the initial assumption. Hence, the necessity of the theorem holds.
\end{proof}

The JAI of these six performance indices requires that ${\rm Col}(\bm U)= {\rm Col}(\bm A^{\rm T}\bm U)$, i.e., the column space of $\bm U$ forms an $\bm A^{\rm T}$-invariant subspace. In 3-dimensional space, the rotation operator $\bm R_z( \bullet )$ admits two invariant proper subspaces: (i) $[x,y,0]^{\rm T}$ lies within the $xy$-plane after rotation about the $z$-axis; (ii) $[0,0,z]^{\rm T}$ remains unaffected. In 6-dimensional full Cartesian space ($m=6$), depending on the arbitrariness of $q_{n\Delta}$ and the block diagonal form of $\bm A^{\rm T}$ (from \eqref{A}), $\bm U$ admits the following three representations:
\begin{equation}
\bm U = \bm U_{xy}, \, \bm U = \bm U_{z} \ {\rm or} \  \bm U = \left[ \bm U_{xy}, \, \bm U_{z} \right]
\label{U}
\end{equation}
Define $[\bm e_{x1}, \bm e_{y1}, \bm e_{z1}, \bm e_{x2}, \bm e_{y2}, \bm e_{z2}]=\mathbf{I}_{6\times 6}$ as the standard basis of 6-dimensional space. Accordingly, 
\begin{equation}
\begin{gathered}
\bm U_{xy}  \subseteq {\rm Span}\{\bm e_{x1}, \bm e_{y1},\bm e_{x2}, \bm e_{y2}\} \\   
\bm U_z \subseteq {\rm Span}\{ \bm e_{z1}, \bm e_{z2} \}
\end{gathered}
\end{equation}
where ${\rm Span}\{\bullet\}$ denotes the vector space spanned by basis vectors. It can be shown that the above two subspaces are mutually orthogonal. Guided by the requirement of $\bm A^{\rm T}$-invariant subspaces, representative admissible components of $\bm U_{xy}$ and $\bm U_z$, whose columns are orthonormal, are listed in Table~\ref{invariant subspaces}. These basis sets can be used to construct admissible task subspaces for which the JAI property is valid. Within these subspaces, the property of JAI is satisfied, as the six performance indices ($M_D,M_P,M_V,K_D,K_P$ and $K_V$) are unaffected by variations in $q_{1\Delta}$ and $q_{n\Delta}$.

It should be emphasized that Table~\ref{invariant subspaces} does not imply that an arbitrary lower-dimensional task space automatically satisfies the JAI property. The corresponding projected twist and projected Jacobian are given by $\bm V_U=\bm U^{\rm T}\bm V$ and $\bm J_U=\bm U^{\rm T}\bm J$. The directional, planar, or volumetric performance indices are then evaluated within this task subspace. The JAI property holds only when the selected subspace satisfies $ {\rm Col}\left(\bm U\right)= {\rm Col}\left(\bm A^{\rm T}\bm U\right)$.

\begin{table}[htbp]
        \centering
        \renewcommand{\arraystretch}{1.5}
        \caption{Representative $\bm A^{\rm T}$-invariant subspaces spanned by orthonormal basis vectors.}
        \label{invariant subspaces}
        \begin{tabular}{m{0.35cm}<{\centering}m{1.1cm}<{\centering}m{5.7cm}<{\centering}}
        \toprule
         & Dimension & Mathematical Form  \\
        \midrule
        \multirow{6}{*}{$\bm U_{xy}$} & (i) $l=2$ & $[\cos\psi \, \bm e_{x1}+\sin\psi \, \bm e_{y1},-\sin\psi \, \bm e_{x1}+\cos\psi \, \bm e_{y1}]$,  $\psi \in [0,2\pi)$ \\
        & (ii) $l=2$ & $[\cos\psi \, \bm e_{x2}+\sin\psi \, \bm e_{y2},-\sin\psi \, \bm e_{x2}+\cos\psi \, \bm e_{y2}]$,  $\psi \in [0,2\pi)$ \\
        &  \multirow{1.3}{*}{(iii) $l=2$} & $\dfrac{1}{\sqrt{1+\psi^2}} \, [\bm e_{x1}+\psi\bm e_{x2},\bm e_{y1}+\psi\bm e_{y2}]$,  $\psi \ne 0$ \\
        &  \multirow{1.9}{*}{(iv) $l=4$} & $\begin{gathered}
        \left[\cos\psi \, \bm e_{x1}+\sin\psi \, \bm e_{y1},-\sin\psi \, \bm e_{x1}+\cos\psi \, \bm e_{y1}, \right.  \\[-0.12cm]
         \left. \cos\varphi \, \bm e_{x2}+\sin\varphi \, \bm e_{y2},-\sin\varphi \, \bm e_{x2}+\cos\varphi \, \bm e_{y2}\right],  \\[-0.12cm]
          \psi,\varphi \in [0,2\pi)  \\ 
        \end{gathered} $ \\
        \midrule
        \multirow{2}{*}{$\bm U_{z}$} & (i) $l=1$ & $\cos\psi \, \bm e_{z1}+\sin\psi \, \bm e_{z2}, \ \psi \in [0,2\pi)$ \\
        & (ii) $l=2$ & $[\bm e_{z1},\bm e_{z2}]$ \\
        \bottomrule
        \end{tabular}
\end{table}

2) $M_{C,2}$, $M_{C,F}$, $K_{C,2}$ and $K_{C,F}$

The following conditions hold:

(i) $\bm J_{\Delta}\bm J_{\Delta}^{\rm T}=\bm A \bm J(\bm A\bm J)^{\rm T}=\bm A(\bm J \bm J^{\rm T}) \bm A^{\rm T}$ and $(\bm J_{\Delta}\bm J_{\Delta}^{\rm T})^{-1}=\bm A(\bm J \bm J^{\rm T})^{-1} \bm A^{\rm T}$;

(ii) $\bm K_{\Delta}=(\bm A \bm J \bm K_q^{-1}(\bm A\bm J)^{\rm T})^{-1}=\bm A \bm K \bm A^{\rm T}$, $\bm K_{\Delta}^2=\bm A \bm K^2 \bm A^{\rm T}$ and $\bm K_{\Delta}^{-2}=\bm A \bm K^{-2} \bm A^{\rm T}$.

The four indices undergo the identical orthogonal similarity transformation, which preserves the eigenvalues. Furthermore, the singular values of $\bm J$ are the square roots of the eigenvalues of $\bm J\bm J^{\rm T}$. Therefore, $M_{C,2}$, $M_{C,F}$, $K_{C,2}$ and $K_{C,F}$ are independent of $q_{1\Delta}$ and $q_{n\Delta}$, thus exhibiting JAI.

3) Geometric representation

A geometric representation of JAI in 3-dimensional Cartesian space ($m=3$) is provided, as illustrated in Fig.~\ref{Geometric representation of the joint-angle}. When the first and last joints rotate, the quadratic form matrix of the manipulability/stiffness ellipsoid undergoes an orthogonal similarity transformation, which manifests geometrically as a rotation about the $z$-axis. Under this condition, the JAI of the directional ($M_D$ and $K_D$), planar ($M_P$ and $K_P$), and volumetric ($M_V$ and $K_V$) performance indices is maintained when (for the simplest case $\psi=0$)
\begin{equation}
\bm \xi =\bm \zeta=\left[ {\begin{array}{*{20}{c}}
  0 \\ 
  0 \\ 
  1 
\end{array}} \right], \ \bm P= \left[ {\begin{array}{*{20}{c}}
  1 & 0 \\ 
  0& 1 \\ 
  0&0 
\end{array}} \right], \ \bm Q=\mathbf{I}_{3\times 3},
\end{equation}
respectively. Furthermore, the semi-axis lengths, which are determined by the eigenvalues of the quadratic-form matrix, remain constant. Thus, $M_{C,2}$, $M_{C,F}$, $K_{C,2}$ and $K_{C,F}$ are invariant.

\begin{figure}[hptb]
        \centering
        \includegraphics[width=0.485\textwidth]{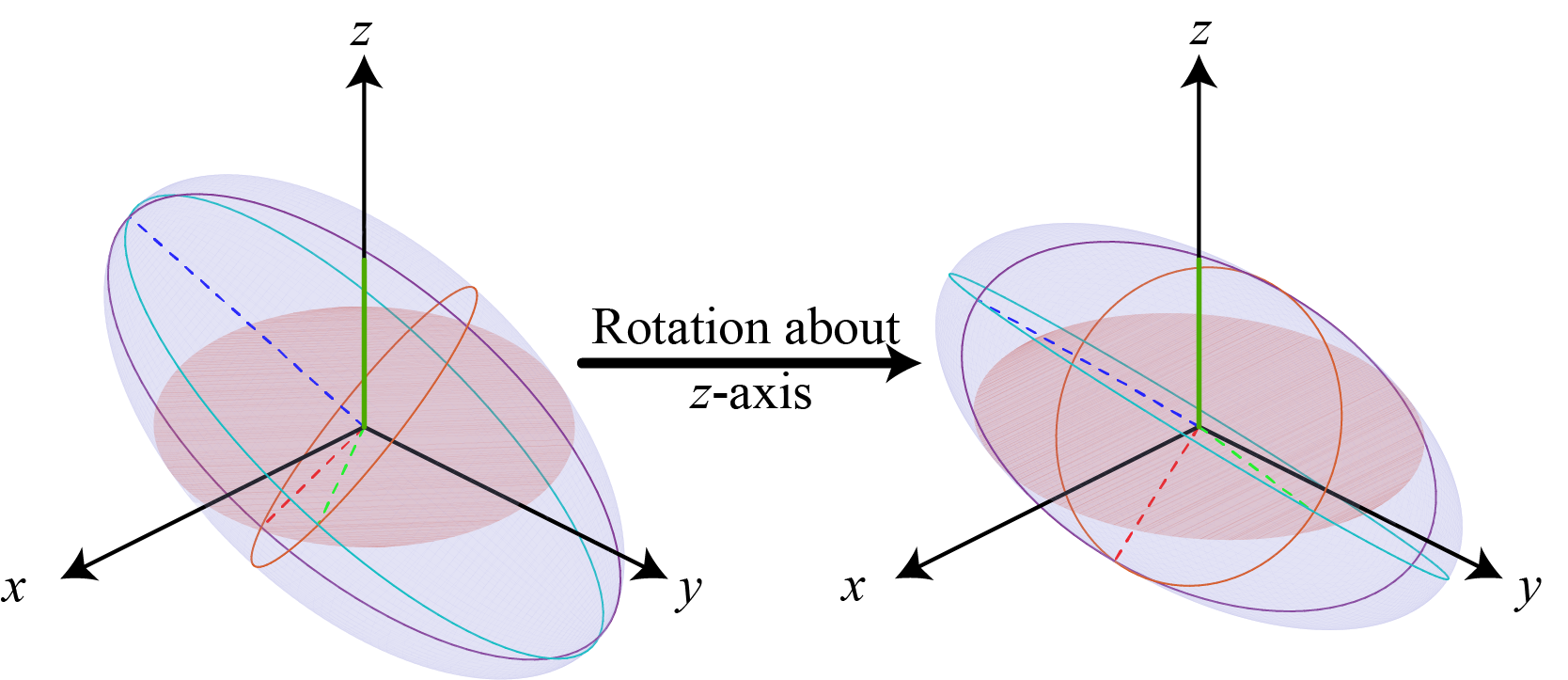}
        \caption{Geometric representation of JAI in 3-dimensional Cartesian space ($m=3$). When the first and last joints rotate, due to $(\bm J_{\Delta}\bm J_{\Delta}^{\rm T})^{-1}=\bm A(\bm J \bm J^{\rm T})^{-1} \bm A^{\rm T}$ and $\bm K_{\Delta}^{-2}=\bm A \bm K^{-2} \bm A^{\rm T}$, the manipulability/stiffness ellipsoid exhibits a rotation about the $z$-axis. The invariants comprise four quantities: (i) the intercept of the ellipsoid on the $z$-axis (deep green line segment); (ii) the cross-sectional area of the ellipsoid within the $xy$-plane (dusty pink plane); (iii) the volume of the ellipsoid (light purple ellipsoid); (iv) the lengths of all semi-axes of the ellipsoid (three dashed line segments in red, green, and blue).}
        \label{Geometric representation of the joint-angle}
\end{figure}

\section{Application Scenarios}

This section provides two application scenarios: performance map construction and task layout optimization.

\subsection{Performance map construction}
\label{Performance map construction}

A robot performance map describes how performance indices are distributed across the reachable workspace. Studying performance maps is important for robot design, analysis, task planning, and optimization. In this work, we adopt a joint-space method, where joint configurations are sampled within their limits and the corresponding performance values are obtained through forward kinematics.

This subsection presents the construction of performance maps for the 7-DOF Franka Research 3 robot, whose mechanism configuration satisfies the proposed JAI. The relevant robot parameters can be obtained from the \href{https://frankarobotics.github.io/docs/}{official website}. Among the Jacobian-based performance indices in Table~\ref{Jacobian-Based Performance Indices for Serial Manipulators}, two widely used indices are considered as case studies: overall manipulability index $\mathcal{M}=\sqrt{\det(\bm {JJ}^{\rm T})}$ and overall stiffness index $\mathcal{K}=\sqrt[6]{\det(\bm K)}$, corresponding to $M_V$ and $K_V$ (normalized) when $\bm Q=\mathbf{I}_{6\times 6}$. The joint stiffness ${\bm K}_q$ is assumed to be ${\rm diag}(3,3,2,2,2,1,1)\times 10^4 $ Nm/rad. Two strategies for constructing performance maps are compared:

\begin{itemize}
\item Conventional strategy (CS): Sample the full joint space to construct performance maps.
\item JAI-based strategy (JAIBS): When constructing the performance map, avoid sampling $q_1$ and $q_7$. Subsequently, considering the influence of $q_1$ on the robot's position, rotate and replicate this map about the $z$-axis of the base frame, and the complete performance map can be obtained. The rotation of the last joint affects neither the robot's position nor the aforementioned performance indices. When calculating the performance index, set $q_1=q_7=0$ to enhance matrix sparsity.
\end{itemize}

For sampling, the physical range of each joint is uniformly discretized into 10 sampling points. Table~\ref{Comparative Results of Computational Efficiency and Storage Space between CS and JAIBS} compares the two strategies in terms of computational efficiency and storage requirements. The results demonstrate the advantages of JAIBS: (i) The computational load is reduced to 1/100 of that required by CS, with the corresponding CPU time decreasing to 1.723 s; (ii) As the data size decreases to 1/10 of its original, the storage space required for the ``.mat'' file decreases by 48.6\%. Fig.~\ref{Performance maps of the Franka Research 3} presents the relative performance difference maps obtained by both strategies. The relative differences of $\mathcal{M}$ and $\mathcal{K}$ remain at the levels of $10^{-10}$ and $10^{-11}$, indicating that the two strategies produce almost identical performance distributions. The observed discrepancies mainly originate from numerical approximation, rather than from a violation of JAI. These results confirm that JAIBS effectively mitigates the curse of dimensionality in high-dimensional computations while preserving the accuracy of the performance maps.

\begin{table}[htbp]
        \centering
        \renewcommand{\arraystretch}{1.5}
        \caption{Comparative results of computational efficiency and storage space between CS and JAIBS.}
        \label{Comparative Results of Computational Efficiency and Storage Space between CS and JAIBS}
        \begin{tabular}{m{0.7cm}<{\centering}m{2.5cm}<{\centering}m{0.9cm}<{\centering}m{1.8cm}<{\centering}m{0.7cm}<{\centering}}
        \toprule
        Strategy & Computational load & CPU time (s) & Data size & Storage space (KB) \\
        \midrule
        CS & $10^{7} \times \text{Sampling}$ & 163.887 & $10^{7} \times \text{Sampling}$ & 35682 \\
        JAIBS & $10^{5} \times \text{Sampling} + \text{Rotation} + \text{Replication}$ & 1.723 & $10^{6} \times \text{Sampling}$ & 18329 \\
        \bottomrule
        \end{tabular}
\end{table}

\begin{figure}[hptb]
        \centering
        \subfigure[]{
        \includegraphics[width=0.23\textwidth]{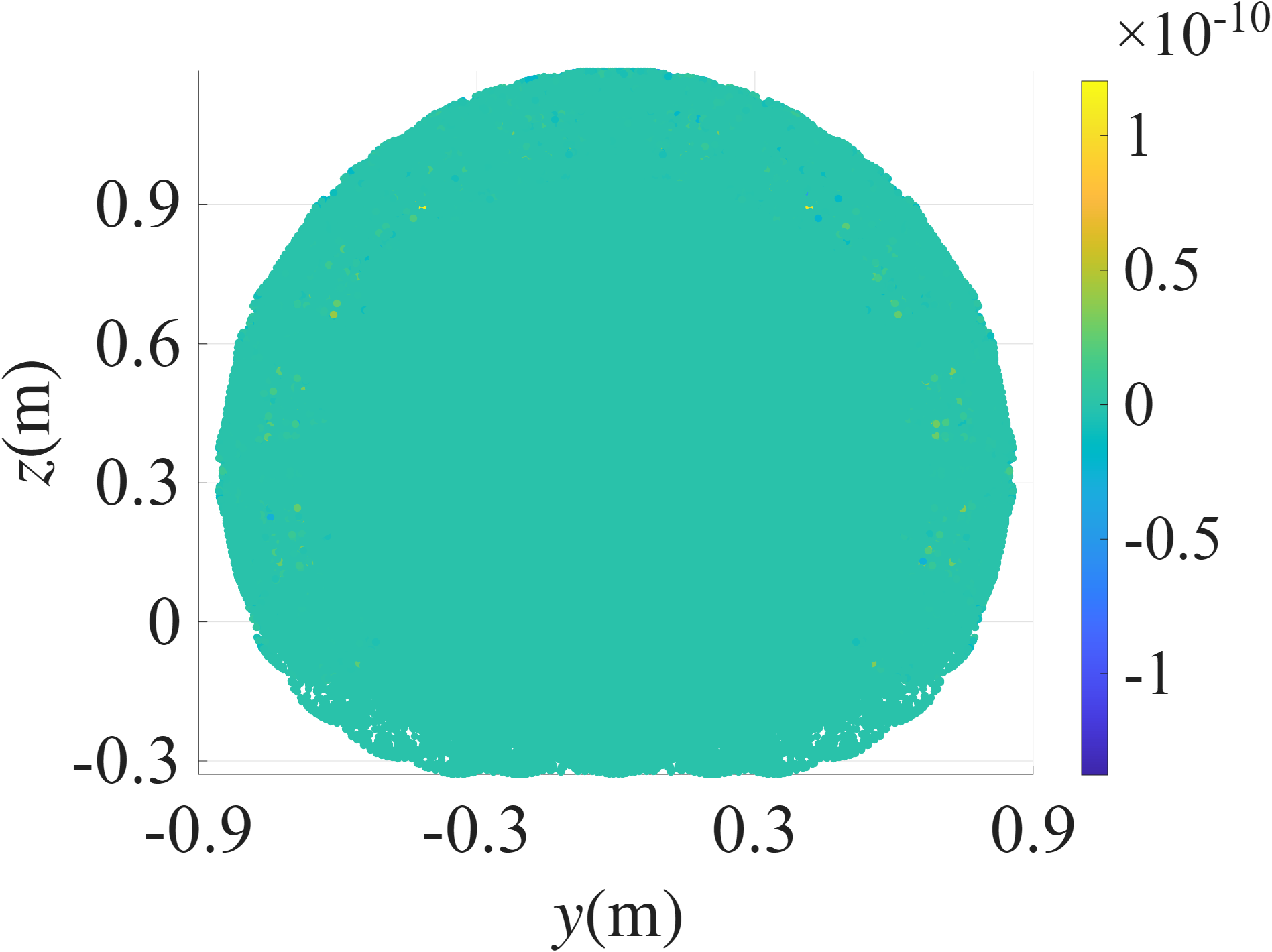}}
        \subfigure[]{
        \includegraphics[width=0.23\textwidth]{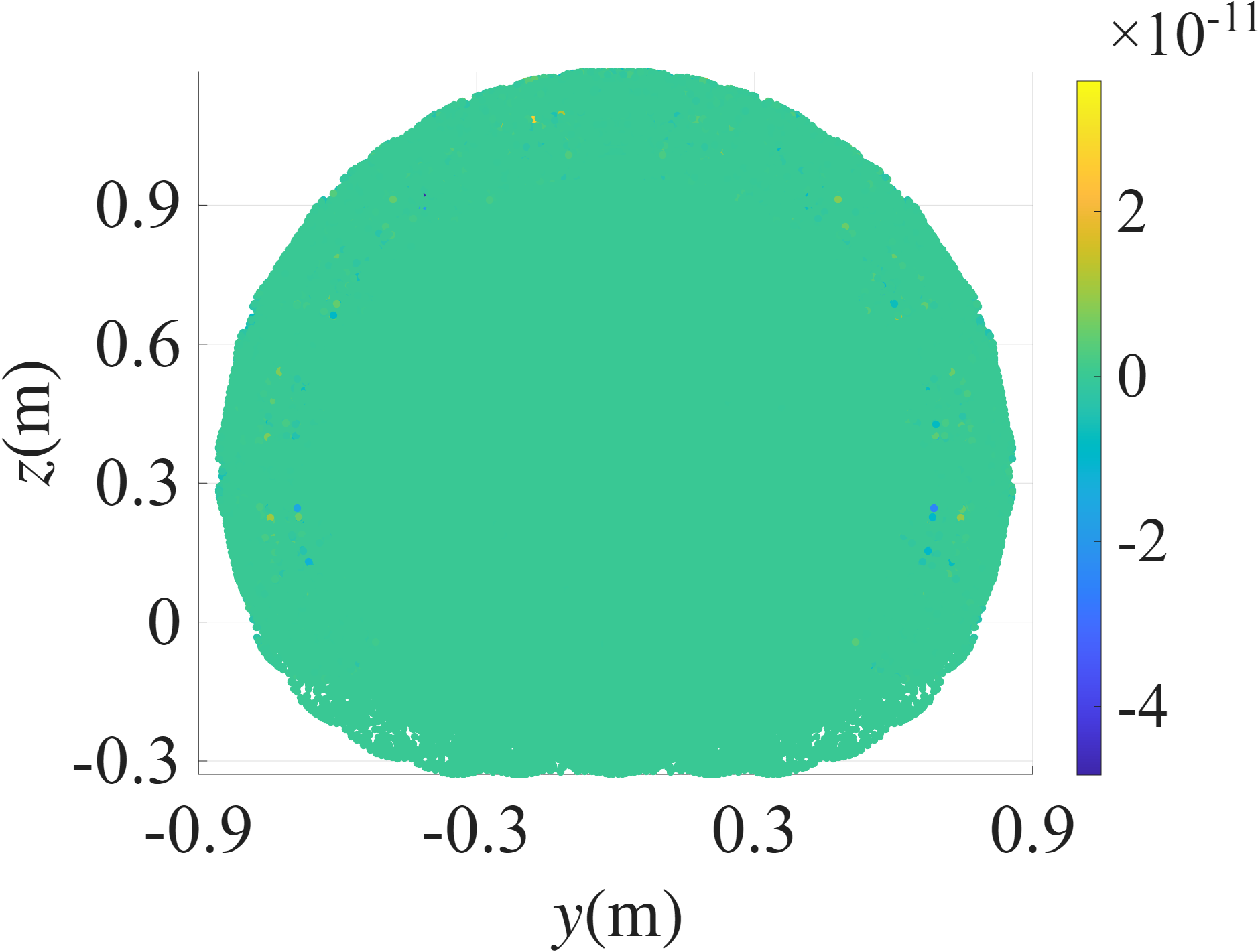}}
        \caption{Pointwise relative performance difference maps between CS and JAIBS. (a) $\mathcal{M}$. (b) $\mathcal{K}$.}
        \label{Performance maps of the Franka Research 3}
\end{figure}

\subsection{Task layout optimization}

Within the reachable workspace, a robot can typically accomplish the same task in different operational regions, as illustrated in Fig.~\ref{Different task layouts on a horizontal plane}. The frame $\{W\}$, which defines the task layout, is highly correlated with the robot's operational performance. For instance, performing a polishing task within a high-stiffness region can effectively mitigate vibrations. Conversely, executing lightweight operations (e.g., plotting or spraying) in a high-manipulability region improves motion dexterity and trajectory tracking accuracy. Consequently, task layout optimization is a crucial step for enhancing task quality. In this work, the task layout optimization for $\{W\}$ is constrained to a horizontal plane, with 3-DOF optimization variables: translations along the $x$-axis and $y$-axis, and rotation about the $z$-axis, denoted collectively as $\bm p_w = [x_w,y_w,\varphi_w]$.

\begin{figure}[hptb]
        \centering
        \includegraphics[width=0.4\textwidth]{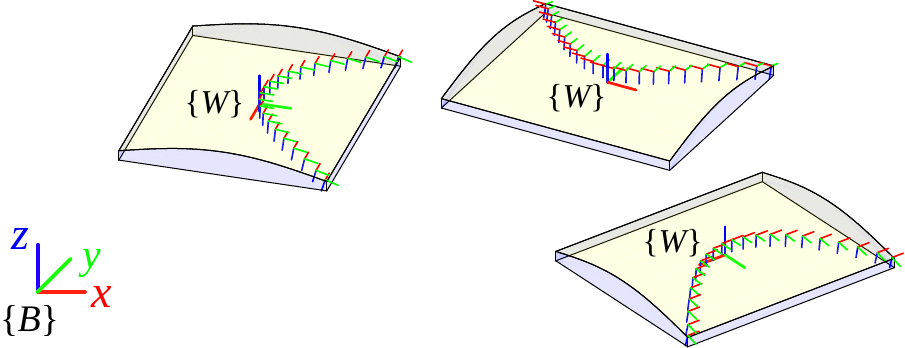}
        \caption{Different task layouts on a horizontal plane.}
        \label{Different task layouts on a horizontal plane}
\end{figure}

This work focuses on the performance indices satisfying JAI, whose distribution on the horizontal plane exhibits a ring-shaped pattern (e.g., Fig.~7 in \cite{zhou2024direct}), as it does not depend on the first joint of the robot. Leveraging this property, an equivalent dimensionality reduction for the task layout optimization can be achieved. Specifically, by rotating $\{W\}$ about the $z$-axis of the base frame (i.e., the first joint axis) until it aligns with the positive $x$-axis, the task layout can be uniquely represented by a reduced-dimensional pose 
\begin{equation}
\bm p_{w}'=[x_w \cos \varphi_{w\Delta}-y_w \sin \varphi_{w\Delta}, \, 0 \, , \, \varphi_w+\varphi_{w\Delta}]
\label{Equivalent task layout}
\end{equation}
where the rotation angle $\varphi_{w\Delta}=-{\rm atan2}(y_w,x_w)$. By correspondingly adjusting the trajectory of the first joint ($q_{1\Delta}=\varphi_{w\Delta}$), the robot can still accomplish the operational task without altering its trajectory performance. This implies that the 2-dimensional optimization variable $[x_w,0,\varphi_w]$ is theoretically equivalent to the original 3-dimensional one in terms of effectiveness. Such dimensionality reduction of the optimization variable can significantly improve computational efficiency. In the task layout optimization (different from Section~\ref{Performance map construction}):
\begin{itemize}
        \item CS: Employ $[x_w,y_w,\varphi_w]$ as the optimization variable.
        \item JAIBS: Employ $[x_w,\varphi_w]$ ($y_w \equiv 0$) as the optimization variable.
\end{itemize}

Two application scenarios are designed to demonstrate the practical utility of JAI, as illustrated in Fig.~\ref{Application scenarios for task layout optimization}. 

(i) Task A: For planar plotting, the Franka Research 3 robot is employed to trace an Archimedean spiral while maintaining a constant orientation. This task requires high flexibility and precision while avoiding singular configurations. Since the motion is restricted to translations within the $xy$-plane, we set the optimization objective as the planar manipulability index: $M_P=1/\sqrt{\det(\bm{P}^{\rm T}(\bm{JJ}^{\rm T})^{-1}\bm{P})}$, where $\bm P=[\mathbf{0}_{2\times 3},\mathbf{I}_{2\times 2},\mathbf{0}_{2\times 1}]^{\rm T}$. Optimizing this performance index enhances the robot's motion capability in the $xy$-plane. The inverse kinematics adopts a null-space gradient projection method:
\begin{equation}
\dot{\bm q}=\bm J^\#\bm V+k_0(\mathbf{I}_{7\times 7}-\bm J^\#\bm J)(\partial M_P/\partial \bm q)
\label{gradient projection method}
\end{equation}
where $k_0=1$ and $\bm J^\#=\bm J^{\rm T}(\bm J\bm J^{\rm T})^{-1}$.

(ii) Task B: Curved surface polishing using a 6-DOF IRB 4400 robot. The desired polishing path lies on an arc surface, whose projection onto the horizontal plane forms a cosine curve. The $z$-axis of the robot's tool frame always remains aligned with the surface normal. The polishing task is generally characterized as a low-speed process. Keeping robots in high-stiffness regions can improve force control stability, thereby enhancing polishing quality~\cite{wang2026posture}. Since the polishing force is applied along the $z$-axis of the tool frame, the optimization objective focuses on the directional stiffness index: $K_D=1/\sqrt{\bm{\zeta}^{\rm T}\bm{K}^{-2}\bm{\zeta}}$, where $\bm{\zeta}=[0,0,0,0,0,1]^{\rm T}$. The inverse kinematics of the robot is solved using a closed-form solution based on Pieper's Criterion. 

\begin{figure}[hptb]
        \centering
        \subfigure[]{
        \includegraphics[width=0.2\textwidth]{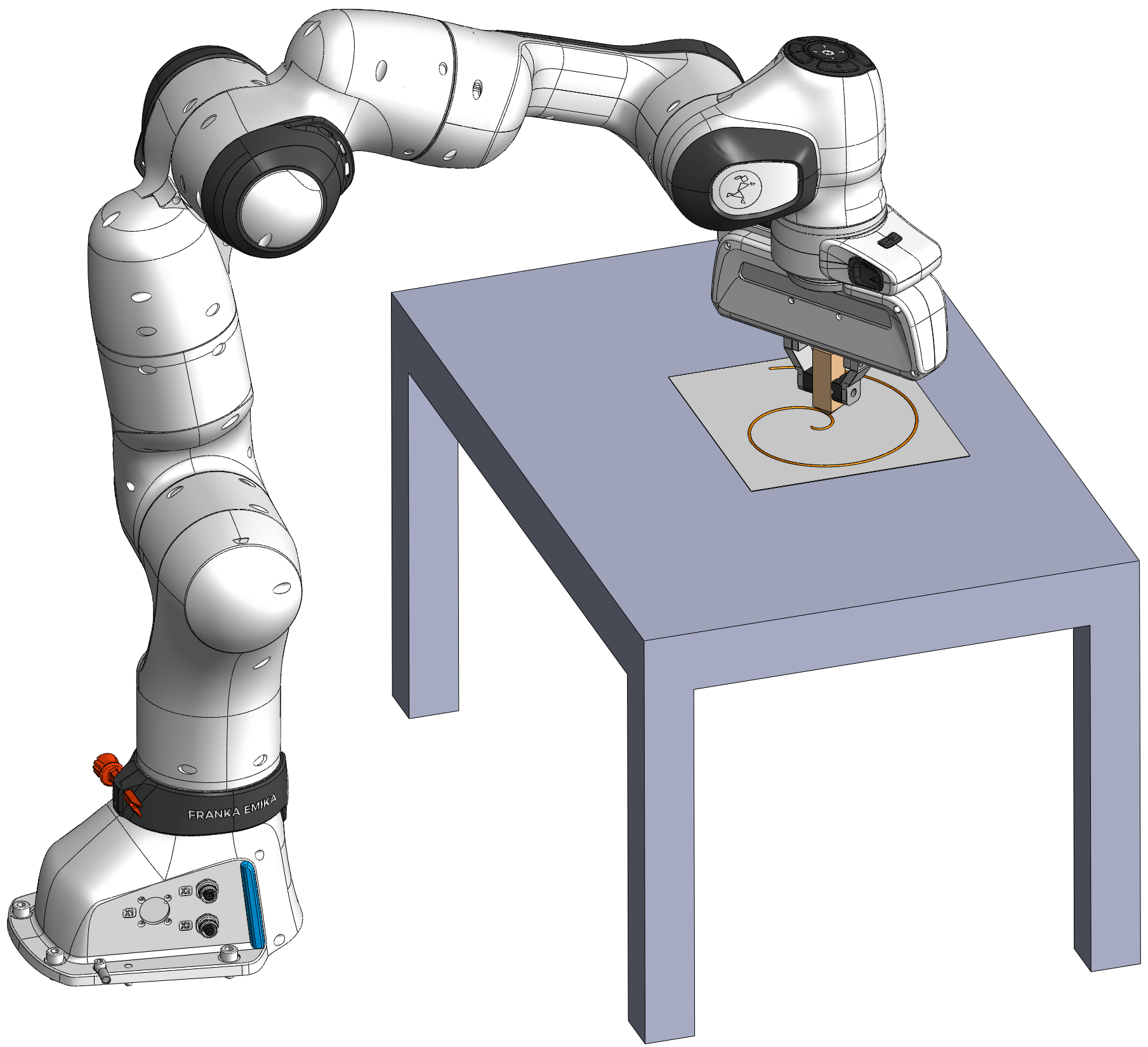}}
        \hspace{0.1cm}
        \subfigure[]{
        \includegraphics[width=0.21\textwidth]{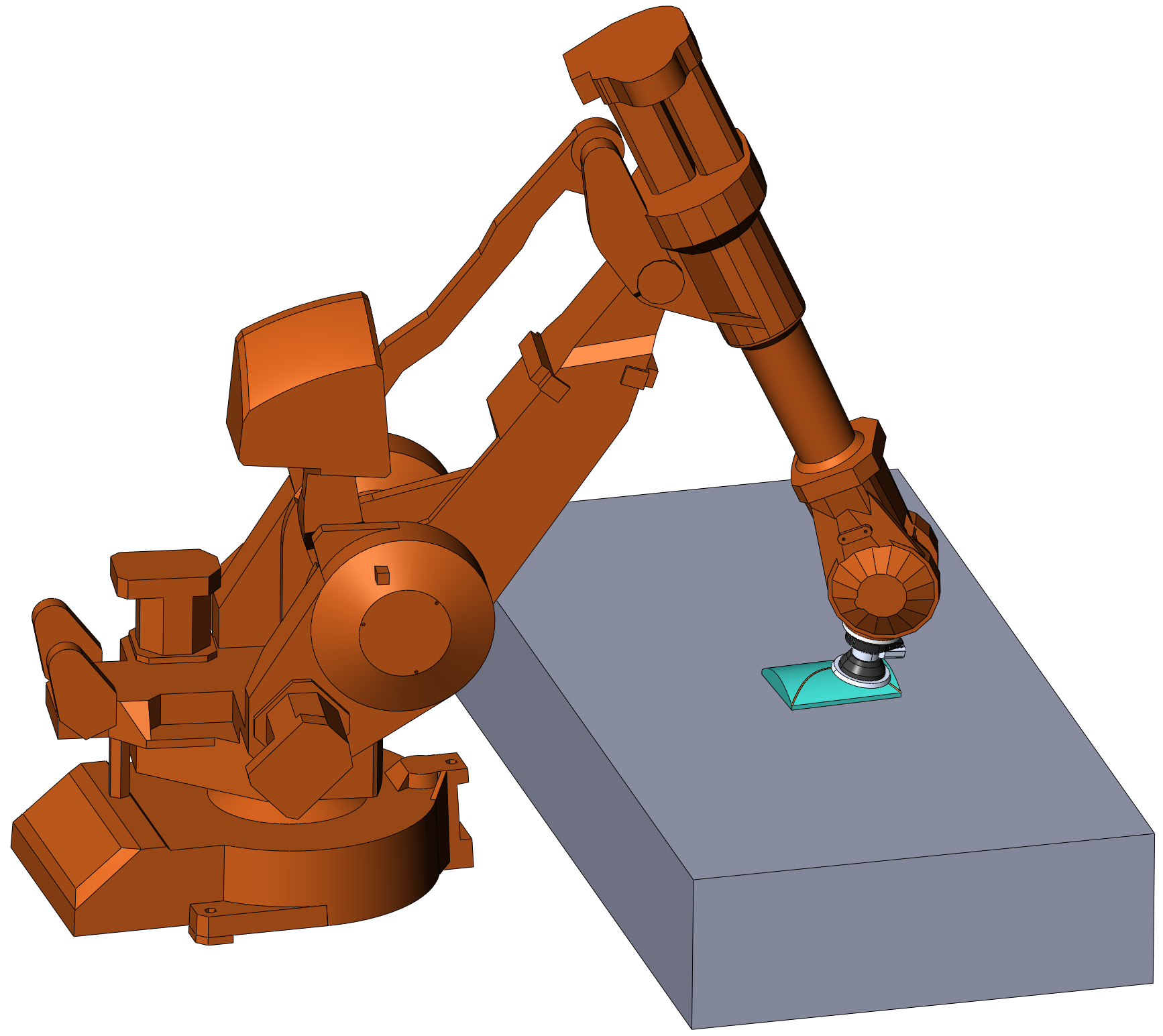}}
        \caption{Application scenarios for task layout optimization. (a) Task A: plotting using Franka Research 3. (b) Task B: polishing using IRB 4400.}
        \label{Application scenarios for task layout optimization}
\end{figure}

The total durations for Task A and Task B are 5 s and 15 s. Trajectory planning is conducted within a sequential solution framework with a sampling period of 0.01 s. The task layout optimization problem is solved in MATLAB using the interior-point method. The step tolerance is set to $10^{-5}$. The two serial robots conform to the configuration requirements illustrated in Fig.~\ref{robot}.

Table~\ref{Task layout optimization results of CS and JAIBS} presents the results of task layout optimization. Each case includes three outcomes derived from random initial conditions. The ``Average performance'' represents the mean performance index during task execution ($M_P$ for Task A and $K_D$ for Task B), and the ``Equivalent task layout'' is obtained from~\eqref{Equivalent task layout}. For JAIBS, this is equivalent to the task layout itself.

In terms of optimization effectiveness, both strategies perform comparably, with differences remaining within 0.1\%. Due to the low dimensionality of the optimization variables, both strategies exhibit favorable convergence. Nevertheless, the task layout results obtained by CS are sensitive to initial conditions and computational procedures, whereas JAIBS yields consistent task layouts, thereby demonstrating superior robustness and consistency. Moreover, the equivalent task layouts derived from CS are nearly identical to those of JAIBS, with only minor deviations in Task A attributable to numerical iterative computations of~\eqref{gradient projection method}. In this respect, Task B, which employs closed-form inverse kinematics, shows smaller deviations and stronger computational stability than Task A, which relies on numerical solutions. 

As discussed above, redundancy exists in the optimization variables due to JAI (primarily concerning the first joint): a 3-dimensional task layout effectively has only 2 DOFs. Consequently, JAIBS incurs substantially lower computational costs, reducing them by approximately 45.7\% to 73.7\% compared with CS. Although Task B employs a closed-form solution, its computational efficiency is not significantly higher than Task A. This can be attributed to two factors: (i) the number of solutions required for Task B is about three times that of Task A (15 s vs. 5 s total duration); (ii) based on the sequential solution strategy framework, the number of iterations of~\eqref{gradient projection method} is very small, typically fewer than 5. In summary, for task layout optimization, JAIBS significantly improves computational efficiency and stability while maintaining performance, making it suitable for practical applications.

\begin{table*}[htbp]
\centering
\renewcommand{\arraystretch}{1.5}
\caption{Task layout optimization results of CS and JAIBS.}
\label{Task layout optimization results of CS and JAIBS}
\begin{tabular}{m{1cm}<{\centering}m{1cm}<{\centering}m{0.5cm}<{\centering}m{1.8cm}<{\centering}m{3.5cm}<{\centering}m{2.8cm}<{\centering}m{1.2cm}<{\centering}m{1.8cm}<{\centering}}
\toprule
Task & Strategy & No. & Average performance & Task layout (m,m,rad) & Equivalent task layout (m,m,rad) & CPU Time (s) & Average CPU time (s) \\
\midrule
\multirow{6.2}{*}{Task A} & \multirow{3.2}{*}{CS} & $1$ & $0.0967$ & $[0.4988,0.0503,1.8742]$ & $[0.5014,0,1.7737]$ & $14.794$ & \multirow{3.1}{*}{$17.604$} \\
&  & $2$ & $0.0967$ & $[0.4949,0.0813,1.9370]$ & $[0.5015,0,1.7742]$ & $23.367$ &  \\
&  & $3$ & $0.0967$ & $[0.5012,-0.0249,1.7236]$ & $[0.5018,0,1.7732]$ & $14.650$  &  \\
& \multirow{3.2}{*}{JAIBS} & $1$ & $0.0967$ & $[0.5015,0,1.7830]$ & $[0.5015,0,1.7830]$ & $6.181$  & \multirow{3.1}{*}{$6.231$} \\
&  & $2$ & $0.0967$ & $[0.5015,0,1.7828]$ & $[0.5015,0,1.7828]$ & $6.364$ 
 &   \\
&  & $3$ & $0.0967$ & $[0.5015,0,1.7828]$ & $[0.5015,0,1.7828]$ & $6.150$  &   \\
\midrule
\multirow{6.2}{*}{Task B} & \multirow{3.2}{*}{CS} & $1$ & $1.7768\times 10^6$ & $[-0.2250,0.7748,0.2826]$ & $[0.8068,0,-1.5708]$ &$ 14.219 $ & \multirow{3.1}{*}{$14.686$} \\
 &  & $2$ & $1.7768\times 10^6$ & $[0.5086,-0.6263,-2.4596]$ & $[0.8068,0,-1.5708]$ & $11.794 $ & \\
 &  & $3$ & $1.7768\times 10^6$ & $[0.7131,0.3773,-1.0842]$ & $[0.8068,0,-1.5708]$ & $18.044 $  &  \\
 & \multirow{3.2}{*}{JAIBS} & $1$ & $1.7768\times 10^6$ & $[0.8068,0,-1.5708]$ & $[0.8068,0,-1.5708]$ & $6.410$ & \multirow{3.1}{*}{$5.868$} \\
 &  & $2$ & $1.7768\times 10^6$ & $[0.8068,0,-1.5708]$ & $[0.8068,0,-1.5708]$ &$ 5.866$ & \\
 &  & $3$ & $1.7768\times 10^6$ & $[0.8068,0,-1.5708]$ & $[0.8068,0,-1.5708]$ & $5.327$ &  \\
\bottomrule
        \end{tabular}
\end{table*}

\section{Conclusions}

This study introduces the concept of JAI for Jacobian-based performance indices of serial robots. JAI is effective for specific indices, manifested as being independent of the first and last joints of the robot, thereby improving the efficiency of performance analysis and optimization. Through applications in performance map construction and task layout optimization, it is demonstrated that JAIBS achieves substantial gains in computational efficiency while preserving effectiveness. The results emphasize the potential of JAI to enhance the design, optimization, and planning for serial robots, particularly in scenarios requiring high computational efficiency.


\bibliographystyle{IEEEtran} 
\bibliography{IEEEabrv,ref}                        

\end{document}